\newtheorem{theorem}{Theorem}
\newtheorem{lemma}{Lemma}
\newcommand{\m}{\mathrm{med}_{\lambda} }
\newcommand{\cut}{\mathrm{Cut}}
\newcommand{\real}{\mathbb{R}}
\newcommand{\ones}{{\bf1}}
\newcommand{\x}{\mathbf{x}}
\newcommand{\En}{\mathcal{E}}
\newcommand{\prox}{\text{prox}}
\newcommand{\proj}{\mathrm{proj}}
\newcommand{\R}{\mathbb{R}}
\newcommand{\argmin}{\text{argmin}}
\newcommand{\B}{\mathcal{B}}
\newcommand{\T}{\mathcal{T}}
\title{Multiclass Total Variation Clustering}
\author{ Xavier Bresson\thanks{Department of Computer Science, City University of Hong Kong, Hong Kong ({\tt xbresson@cityu.edu.hk}).},   Thomas Laurent\thanks{Department of Mathematics, University of California Riverside, Riverside CA 92521  ({\tt laurent@math.ucr.edu})}, David Uminsky\thanks{Department of Mathematics, University of San Francisco, San Francisco, CA 94117 ({\tt duminsky@usfca.edu})} and James H. von Brecht\thanks{Department of Mathematics, University of California Los Angeles, Los Angeles CA 90095 ({\tt jub@math.ucla.edu})}  }
\begin{document}

\date{}
\maketitle

\begin{abstract}
Ideas from the image processing literature have recently motivated a new set of clustering algorithms that rely on the concept of total variation. While these algorithms perform well for bi-partitioning tasks, their recursive extensions yield unimpressive results for multiclass clustering tasks. This paper presents a general framework for multiclass total variation clustering that does not rely on recursion. The results greatly outperform previous total variation algorithms and compare well with state-of-the-art NMF approaches.
\end{abstract}
\section{Introduction}
Many clustering models rely on the minimization of an energy over possible partitions of the data set. These discrete optimizations usually pose NP-hard problems, however. A natural resolution of this issue involves relaxing the discrete minimization space into a continuous one to obtain an easier minimization procedure. Many current algorithms, such as spectral clustering methods or non-negative matrix factorization (NMF) methods, follow this relaxation approach. 

A fundamental problem arises when using this approach, however; in general the solution of the relaxed continuous problem and that of the discrete NP-hard problem can differ substantially. In other words, the relaxation is too loose. A \emph{tight} relaxation, on the other hand, has a solution that closely matches the solution of the original discrete NP-hard problem. Ideas from the image processing literature have recently motivated a new set of algorithms \cite{SB09, pro:SzlamBresson10,pro:HeinBuhler10OneSpec,pro:HeinSetzer11TightCheeger,art:BressonTaiChanSzlam12TransLearn, pro:Rang-Hein-constrained, BLUV12, art:BertozziFlenner11DiffuseClassif,MKB12, GMBFP13}
that can obtain tighter relaxations than those used by NMF and spectral clustering. These new algorithms all rely on the concept of total variation. Total variation techniques promote the formation of sharp indicator functions in the continuous relaxation. These functions equal one on a subset of the graph, zero elsewhere and exhibit a non-smooth jump between these two regions. In contrast to the relaxations employed by  spectral clustering and NMF, total variation techniques therefore lead to quasi-discrete solutions that closely resemble the discrete solution of the original NP-hard problem. They provide a promising set of clustering tools for precisely this reason.

Previous  total variation algorithms obtain excellent results for two class partitioning problems   \cite{pro:SzlamBresson10,pro:HeinBuhler10OneSpec,pro:HeinSetzer11TightCheeger, BLUV12} . Until now,  total variation techniques have relied upon a recursive bi-partitioning procedure to handle more than two classes. Unfortunately, these recursive extensions have yet to produce state-of-the art results. This paper presents a general framework for multiclass total variation clustering  that does not rely on a recursive procedure. Specifically, we introduce a new discrete  multiclass clustering  model, its corresponding  continuous relaxation and a new algorithm for optimizing the relaxation. Our approach also easily adapts to handle either unsupervised or transductive clustering tasks. The results significantly outperform previous total variation algorithms and compare well against state-of-the-art approaches \cite{yang2012clustering,yang2012clusteringb,arora2011clustering}. 
We name our approach   Multiclass Total Variation clustering (MTV-clustering).

\section{The Multiclass Balanced-Cut Model}
Given a weighted graph $G = (V,W)$ we let $V=\{ \x_1, \ldots, \x_N\}$ denote the vertex set and $W := \{w_{i,j}\}_{ 1\le i,j\le N}$ denote the non-negative, symmetric similarity matrix. Each entry $w_{ij}$ of $W$ encodes the similarity, or lack thereof, between a pair of vertices. The classical balanced-cut (or, Cheeger cut) \cite{art:Cheeger70RatioCut, book:Chung97Spectral}  asks for a partition of $V = A \cup A^{c}$ into two disjoint sets that minimizes the set energy
\begin{equation}  \label{2classes}
\mathrm{Bal}(A) := \frac{ \cut(A,A^c) } { \min\{ |A| , |A^c| \} } = \frac{ \sum_{\x_i \in A, \x_j \in A^c} w_{ij} } { \min\{ |A| , |A^c| \} }. 
\end{equation}
A simple rationale motivates this model: clusters should exhibit similarity between data points, which is reflected by small values of $\cut(A,A^{c})$, and also form an approximately equal sized partition of the vertex set. Note that $\min\{ |A| , |A^c| \}$ attains its maximum when $|A| = |A^c| = N/2,$ so that for a given value of $\cut(A,A^{c})$ the minimum occurs when $A$ and $A^c$ have approximately  \nolinebreak equal \nolinebreak size. 

We generalize this model to the multiclass setting by pursuing the same rationale. For a given number of classes $R$ we formulate our generalized balanced-cut problem as
\begin{equation*}
\left.\begin{aligned}
&  \hspace{3.5cm}  \text{Minimize } \sum^{R}_{r=1} \;\; \frac{ \cut(A_r,A^c_r) } { \min\{ \lambda |A_r| , |A^c_r| \} }\\
 & \text{over all disjoint partitions $A_{r} \cap A_{s} = \emptyset $, $A_{1} \cup \cdots \cup A_{R} = V$ of the vertex set.}
\end{aligned}
\hspace{0.5cm} \right\}
\qquad \text{(P)}
\end{equation*}
 In this model the parameter $\lambda$ controls the sizes of the sets $A_{r}$ in the partition. Previous work \cite{art:BressonTaiChanSzlam12TransLearn} has used $\lambda = 1$ 
 to obtain a multiclass energy  by a straightforward sum of  the  two-class balanced-cut terms  \eqref{2classes}. While this follows the usual practice, it erroneously attempts to enforce that each set in the partition occupy half of the total number of vertices in the graph. We instead select the parameter $\lambda$ to ensure that each of the classes approximately occupy the appropriate fraction $1/R$ of the total number of vertices. As the maximum of $\min\{ \lambda |A_r| , |A^c_r| \}$ occurs when $\lambda |A_r| = |A^{c}_r| = N - |A_r|,$ we see that $\lambda = R-1$ is the proper choice.

This general framework also easily incorporates \emph{a priori} known information, such as a set of labels for transductive learning. If $L_{r} \subset V$ denotes a set of data points that are \emph{a priori} known to belong to class $r$ then we simply enforce $L_r \subset A_r$ in the definition of an allowable partition of the vertex set. In other words, any allowable disjoint partition $A_{r} \cap A_{s} = \emptyset$, $A_{1} \cup \cdots \cup A_{R} = V$ must also respect the given set of labels.

\section{Total Variation and a Tight Continuous Relaxation}
We derive our continuous optimization by relaxing the set energy (P) 
to the continuous energy
\begin{equation} \label{ACC2}
 \En(F) = \sum_{r=1}^R \frac{   \|f_r\|_{TV} }{\|f_r -\m(f_r)\|}_{1,\lambda}.
\end{equation}
Here $F := [f_1, \ldots, f_R] \in \mathbb{M}_{N \times R}([0,1])$ denotes the $N \times R$ matrix that contains in its columns the relaxed optimization variables associated to the $R$ clusters. A few definitions will help clarify the meaning of this formula. The total variation $\|f\|_{TV}$ of a vertex function $f: V \to \R$  is defined by:
\begin{equation} \label{def:TV}
\|f\|_{TV} = \sum_{i=1}^n w_{ij} |f(\x_i)-f(\x_j)|.
\end{equation}
Alternatively, if we view a vertex function $f$ as a vector $(f(\x_1),\ldots,f(\x_N))^{t} \in \real^{N}$ then we can write
\begin{equation}\label{eq:gradmat}
\|f\|_{TV} := \| Kf \|_{1}. 
\end{equation}
Here $K \in \mathbb{M}_{M \times N}(\R)$ denotes the \emph{gradient matrix} of a graph with $M$ edges and $N$ vertices. Each row of $K$ corresponds to an edge  and each column corresponds to a vertex. For any edge $(i,j)$ in the graph the corresponding row in the matrix $K$ has an entry $w_{ij}$ in the column corresponding to the $i^{{\rm th}}$ vertex, an entry $-w_{ij}$ in the column corresponding to the $j^{ {\rm th}}$ vertex and zeros otherwise. 
 
To make sense of the remainder of \eqref{ACC2} we must introduce the asymmetric $\ell^1$-norm, which 
\begin{equation}
\|f\|_{1,\lambda}= \sum_{i=1}^n | f(\x_i)|_\lambda \qquad \text{ where } \qquad |t|_\lambda=
\begin{cases}
\lambda t & \text{if } t\ge0 \\
- t & \text{if } t<0.
\end{cases} 
\end{equation}
Finally we define the $\lambda$-median (or quantile), denoted $\m(f)$, as:
\begin{align}
&\text{$\m(f) = $ the { $(k+1)^{ {\rm st} }$ } largest value in the range of $f$,}  \text{ where } k= \lfloor  N/(\lambda+1)\rfloor.  
\end{align}
These definitions, as well as the relaxation \eqref{ACC2} itself, were motivated by the following theorem. Its proof, found in the Appendix, relies only on using the three preceeding definitions and some simple algebra.
\begin{theorem} If $f = \ones_A$ is the indicator function of a subset $A \subset V$  then
\begin{equation*}
\frac{   \|f\|_{TV} }{\|f -\m(f)\|}_{1,\lambda} = \frac{2 \; \cut(A ,A^c)}{\min \left\{ \lambda  |A| ,  |A^c| \right\}}. 
\end{equation*}
\end{theorem}
The preceding theorem allows us to restate the original set optimization problem (P) in the equivalent discrete form
\begin{equation*}
\left.\begin{aligned}
&  \hspace{3.5cm}  \text{Minimize }  \sum_{r=1}^R \frac{   \|f_r\|_{TV} }{\|f_r -\m(f_r)\|}_{1,\lambda}\\
 & \text{over  non-zero functions } f_1, \ldots, f_R: V \to \{0,1\} \text{ such that }  f_1+ \ldots + f_R=\ones_V.
\end{aligned}
\hspace{0.5cm} \right\}
\qquad \text{(P')}
\end{equation*}
Indeed, since the non-zero functions $f_r$ can take only two values, zero or one, they must define indicator functions of some nonempty set. The simplex constraint $f_1+ \ldots + f_R=\ones_V$ then guarantees that the sets $A_r := \{ \x_i \in V:  f_r(\x_i) = 1 \}$ form a partition of the vertex set. We obtain the relaxed version (P-rlx) of (P') in the usual manner by allowing $f_{r} \in [0,1]$ to have a continuous range. This yields 
\begin{equation*}
\left.\begin{aligned}
&  \hspace{3.5cm}  \text{Minimize }  \sum_{r=1}^R \frac{   \|f_r\|_{TV} }{\|f_r -\m(f_r)\|}_{1,\lambda}\\
& \text{over   functions } f_1, \ldots, f_R: V \to [0,1] \text{ such that }  f_1+ \ldots + f_R=\ones_V.
\end{aligned}
\hspace{0.5cm} \right\}
\qquad \text{(P-rlx)}
\end{equation*}


The following two points form the foundation on which total variation clustering relies:

\textbf{1} --- As the next subsection details, the total variation terms give rise to \emph{quasi-indicator} functions. That is, the relaxed solutions $[f_1, \ldots, f_R]$ of (P-rlx) mostly take values near zero or one and exhibit a sharp, non-smooth transition between these two regions. Since these quasi-indicator functions essentially take values in the discrete set $\{0,1\}$ rather than the continuous  interval $[0,1]$,  solving  (P-rlx) is almost equivalent to solving either (P) or (P'). In other words, (P-rlx) is a tight relaxation \nolinebreak  of  \nolinebreak  (P).

\textbf{2} --- Both functions $f \mapsto  \|f\|_{TV} $ and $f  \mapsto  \| f -\m(f) \|_{1,\lambda}$ are convex. The simplex constraint in (P-rlx) is also convex. Therefore solving (P-rlx) amounts to minimizing a sum of ratios of convex functions with convex constraints. As the next section details, this fact allows us to use machinery from convex analysis to develop an efficient, novel algorithm for such problems.

\subsection{The Role of Total Variation in the Formation of Quasi-Indicator Functions}

To elucidate the precise role that the total variation itself plays in the formation of quasi-indicator functions, it proves useful to consider a version of (P-rlx) that uses a spectral relaxation in place of the total variation:
\begin{equation*}
\left.\begin{aligned}
&  \hspace{3.5cm}  \text{Minimize }  \sum_{r=1}^R \frac{   \|f_r\|_{ {\rm Lap} } }{\|f_r -\m(f_r)\|}_{1,\lambda}\\
 & \text{over   functions } f_1, \ldots, f_R: V \to [0,1] \text{ such that }  f_1+ \ldots + f_R=\ones_V
\end{aligned}
\hspace{0.5cm} \right\}
\qquad \text{(P-rlx2)}
\end{equation*}
Here $ \|f\|^2_{ {\rm Lap} } = \sum_{i=1}^n w_{ij} |f(\x_i)-f(\x_j)|^2$ denotes the spectral relaxation of $\cut(A,A^c)$; it equals  $\langle f, Lf \rangle$ if $L$ denotes the unnormalized graph Laplacian matrix. Thus problem (P-rlx2) relates to \nolinebreak spectral clustering (and therefore NMF \cite{ding2005equivalence}) with a positivity constraint. Note that the only difference between (P-rlx2) and (P-rlx)  is that the exponent $2$ appears in $\| \cdot \|_{ {\rm Lap} }$ while the exponent $1$ appears in the total variation. This simple difference of exponent has an important consequence for the tightness of the relaxations. Figure 1 presents a simple example that illuminates this difference. If we bi-partition the depicted graph, i.e. a line with $20$ vertices and edge weights $w_{ij} = 1$, then the optimal cut lies between vertex $10$ and vertex $11$ since this gives a perfectly balanced cut.  Figure 1(a) shows the vertex function $f_{1}$ generated by (P-rlx) while figure 1(b) shows the one generated by (P-rlx2). Observe that the solution of the total variation model coincides with the indicator function of the desired cut whereas the the spectral model prefers its smoothed version. Note that both functions in figure 1a) and 1b) have exactly the same total variation $\|f\|_{TV}=|f(\x_1)-f(\x_{2})|+ \cdots + |f(\x_{19})-f(\x_{20})|= f(\x_1)-f(\x_{20})=1$ since both functions are monotonic. The total variation model will therefore prefer the sharp indicator function since it differs more from its $\lambda$-median than the smooth indicator function. Indeed, the denominator $\|f_r -\m(f_r)\|_{1,\lambda}$ is larger for the sharp indicator function than for the smooth one. A different scenario occurs when we replace the exponent one in $\|\cdot\|_{TV}$ by an exponent two, however. As $\|f\|^2_{ {\rm Lap} }=|f(\x_1)-f(\x_{2})|^2+ \cdots + |f(\x_{19})-f(\x_{20})|^2$ and $t^2 < t$ when $t < 1$ it follows that$\|f\|_{ {\rm Lap} }$ is much smaller for the smooth function than for the sharp one. Thus the spectral model will prefer the smooth indicator function despite the fact that it differs less from its $\lambda$-median. We therefore recognize the total variation as the driving force behind the formation of sharp indicator functions. 

\begin{figure}[h!]
\vspace{-.12in}
\hspace{.05in}\includegraphics[scale=.4]{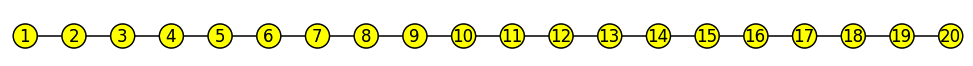} \\
\subfigure[]{ \includegraphics[scale=.35]{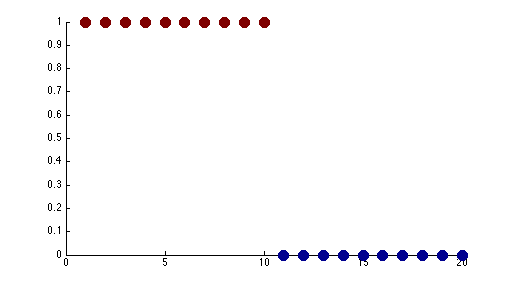} }
\subfigure[]{ \includegraphics[scale=.35]{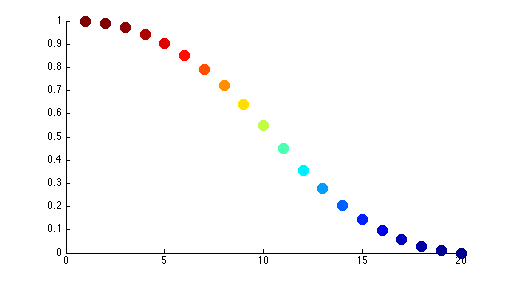}  }
\caption{Top: The graph used for both relaxations. Bottom left: the solution given by the total variation relaxation. Bottom right: the solution given by the spectral relaxation. Position along the $x$-axis $=$ vertex number, height along the $y$-axis $=$ value of the vertex function.}
\end{figure}

This heuristic explanation on a simple, two-class example generalizes to the multiclass case and to real data sets (see figure \ref{fig:realdata}). In simple terms, quasi-indicator functions arise due to the fact that the total variation of a sharp indicator function equals the total variation of a smoothed version  of the same indicator function. The denominator $\|f_r -\m(f_r)\|_{1,\lambda}$ then measures the deviation of these functions from their $\lambda$-median. A sharp indicator function deviates more from its median than does its smoothed version since most of its values concentrate around zero and one. The energy is therefore much smaller for a sharp indicator function than for a smooth indicator function, and consequently the total variation clustering energy always prefers sharp indicator functions to smooth ones. For bi-partitioning problems this fact is well-known. Several previous works have proven that the relaxation is exact in the two-class case; that is, the total variation solution coincides with the solution of the original NP-hard problem \cite{book:Chung97Spectral,pro:SzlamBresson10,BLUV12,pro:BuhlerHein09pLapla}. 

Figure \ref{fig:realdata} illustrates the result of the difference between total variation and NMF relaxations on the data set OPTDIGITS, which contains 5620 images of handwritten numerical digits. Figure \ref{fig:realdata}(a) shows the quasi-indicator function $f_{4}$ obtained by our MTV algorithm while \ref{fig:realdata}(b) shows the function $f_{4}$ obtained from the NMF algorithm of \cite{arora2011clustering}. We extract the portion of each function corresponding to the digits four and nine, then sort and plot the result. The MTV relaxation leads a sharp transition between the fours and the nines while the NMF relaxation leads to a smooth transition.

\begin{figure}[h!]
\centering
\includegraphics[scale=.34]{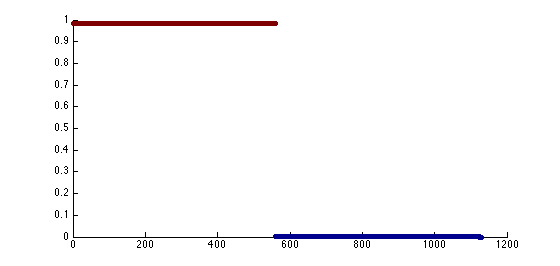} \; \includegraphics[scale=.34]{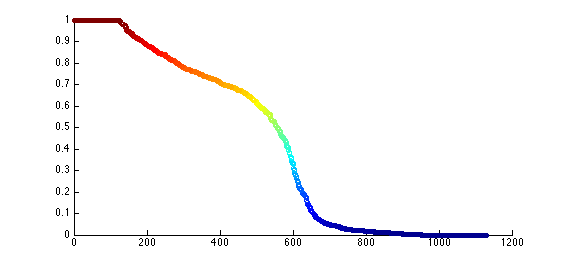}
\caption{Left:  Solution $f_{4}$ from our MTV algorithm plotted over the fours and nines. Right: Solution $f_{4}$ from LSD \cite{arora2011clustering} plotted over the fours and nines.}
\label{fig:realdata}
\end{figure}

\subsection{Transductive Framework}
From a modeling point-of-view, the presence of transductive labels poses no additional difficulty. In addition to the simplex constraint 
\begin{equation}\label{eq:simplex}
F \in \Sigma := \left\{ F \in \mathbb{M}_{N \times R}([0,1]) : f_r(\x_i) \geq 0 , \; \sum^{R}_{r=1} f_r(\x_i) = 1 \right\}
\end{equation}
required for unsupervised clustering we also impose the set of labels as a hard constraint. If $L_{1},\ldots,L_{R}$ denote the $R$ vertex subsets representing the labeled points, so that $\x_i \in L_r$ means $\x_i$ belongs to class $r,$ then we may enforce these labels by restricting $F$ to lie in the subset
\begin{equation}\label{eq:labels}
F \in \Lambda := \left\{ F \in \mathbb{M}_{N \times R}([0,1]) : \forall r, \; (f_1(\x_i),\ldots,f_R(\x_i)) = \mathbf{e}_r \;\; \forall \; \x_i \in L_r \; \right\}.
\end{equation}
Here $\mathbf{e}_{r}$ denotes the row vector containing a one in the $r^{ {\rm th}}$ location and zeros elsewhere. Our model for transductive classification then aims to solve the problem
\begin{equation*}
\left.\begin{aligned}
&  \hspace{0.75cm}  \text{ Minimize }  \sum_{r=1}^R \frac{   \|f_r\|_{TV} }{\|f_r -\m(f_r)\|}_{1,\lambda} \text{over matrices } F \in \Sigma \cap \Lambda.
\end{aligned}
\hspace{0.5cm} \right\}
\qquad \text{(P-trans)}
\end{equation*}
Note that $\Sigma \cap \Lambda$ also defines a convex set, so this minimization remains a sum of ratios of convex functions subject to a convex constraint. Transductive classification therefore poses no additional algorithmic difficulty, either. In particular, we may use the proximal splitting algorithm detailed in the next section for both unsupervised and transductive classification tasks.

\section{Proximal Splitting Algorithm}
This section details our proximal splitting algorithm for finding local minimizers of a sum of ratios of convex functions subject to a convex constraint.  We start by showing in the first subsection that the functions
\begin{equation} \label{TB}
T(f):=\|f\|_{TV} \quad \text{and} \quad B(f):=\|f-\m(f) \ones\|_{1,\lambda} 
\end{equation}
involved in (P-rlx) or (P-trans)  are indeed convex. We also give an explicit formula for a subdifferential of $B$ since our proximal splitting algorithm requires this in explicit form.
 We then summarize a few properties of proximal operators before presenting the algorithm.

\subsection{Convexity, Subgradients and Proximal Operators}
Recall that we may view each function $f: V \to \real$ as a vector in $\real^N$ with $f(\x_i)$ as the $i^{ {\rm th} }$ component of the vector. We may then  view $T$ and $B$ as functions from $\R^{N}$ to $\R$. The next theorem states that both $B$ and $T$ define convex functions on $\R^N$ and furnishes an element $v \in \partial B(f)$ by means of an easily computable formula. The formula for the subdifferential generalizes a related result for the symmetric case \cite{pro:HeinBuhler10OneSpec} to the asymmetric setting.
    \begin{theorem} \label{thm:subdiff}
The functions  $B$ and $T$  are convex.  Moreover, given  $f \in \real^N$ the vector  $v \in \real^N$ defined by
 { \begin{gather}
  v(\x_i)=\begin{cases}
 \lambda & \text{ if } f(\x_i) > \m(f)\\
  \frac{ n^- - \lambda n^+ }{n^0} & \text{ if } f(\x_i) = \m(f) \\
 -1 & \text{ if } f(\x_i) < \m(f)
 \end{cases}   \quad  
  \text{ where} \quad   \begin{cases} n^0 \;= |\{\x_i \in V: f(\x_i)=\m(f)\}| \\  n^-=|\{\x_i \in V: f(\x_i)<\m(f)\}| \\ n^+=|\{ \x_i \in V: f(\x_i)>\m(f)\}|
  \end{cases}
\nonumber
 \end{gather} }
belongs to  $\partial B(f)$. 
\end{theorem}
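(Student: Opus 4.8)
The plan is to treat $T$ and $B$ separately, and to tame the awkward dependence of $B$ on the order statistic $\m(f)$ by a variational trick. Convexity of $T$ is immediate: $T(f) = \|Kf\|_{1}$ is the composition of the linear map $K$ with the $\ell^1$-norm, hence convex. The difficulty lies entirely in $B(f) = \|f - \m(f)\ones\|_{1,\lambda}$, since $\m(f)$ is a nonsmooth, nonlinear function of $f$. The key observation I would exploit is that the $\lambda$-median is precisely the minimizer of the asymmetric distance to a constant; that is, I would first prove the lemma
\begin{equation*}
B(f) = \min_{c \in \R} \, \|f - c\ones\|_{1,\lambda}, \qquad \text{with the minimum attained at } c = \m(f).
\end{equation*}

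To establish this I would analyze the one-variable convex function $h(c) := \sum_{i} |f(\x_i) - c|_{\lambda}$. It is piecewise linear, and away from the values taken by $f$ its derivative is $h'(c) = -\lambda\, n^{+}(c) + n^{-}(c)$, where $n^{+}(c)$ and $n^{-}(c)$ count the vertices with $f(\x_i) > c$ and $f(\x_i) < c$. Since $n^{+}$ decreases and $n^{-}$ increases in $c$, the function $h$ is convex and $h'$ changes sign at the quantile balancing $n^{-} = \lambda n^{+}$. Sorting the values decreasingly and evaluating $h'$ on each interval $(f_{(j+1)}, f_{(j)})$ gives $h' = N - j(\lambda+1)$, which is negative for $j > N/(\lambda+1)$ and positive for $j < N/(\lambda+1)$; the minimum is therefore attained at the $(k+1)$-st largest value with $k = \lfloor N/(\lambda+1)\rfloor$, i.e. at $c = \m(f)$. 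Once the lemma holds, convexity of $B$ follows for free: the map $(f,c) \mapsto \|f - c\ones\|_{1,\lambda}$ is jointly convex (a convex function composed with a linear map), and partial minimization of a jointly convex function over one of its arguments yields a convex function.

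For the subdifferential I would invoke the standard marginal-function rule: if $g$ is jointly convex, $B(f) = \min_{c} g(f,c)$, and $c^{\star} = \m(f)$ attains the minimum, then $v \in \partial B(f)$ whenever $(v,0) \in \partial g(f, c^{\star})$. Writing $g(f,c) = \sum_{i} |f(\x_i) - c|_{\lambda}$ and using that the scalar map $t \mapsto |t|_{\lambda} = \max(\lambda t, -t)$ has subdifferential $\{\lambda\}$, $\{-1\}$, or $[-1,\lambda]$ according as $t>0$, $t<0$, or $t=0$, the $f$-components of $\partial g$ are forced to equal $\lambda$ above the median and $-1$ below it, while each of the $n^{0}$ median vertices is free to take any value $s_i \in [-1,\lambda]$. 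The term-by-term structure $\partial |f(\x_i)-c|_{\lambda} = \{(s,-s)\}$ shows that the $c$-component equals $-\lambda n^{+} + n^{-} - \sum_{\text{med}} s_i$, so the constraint $(v,0)\in\partial g$ reduces to $\sum_{\text{med}} s_i = n^{-} - \lambda n^{+}$. Choosing every $s_i$ equal to the constant $(n^{-} - \lambda n^{+})/n^{0}$ satisfies this and reproduces the stated formula.

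The main obstacle will be verifying that this constant genuinely lies in the admissible interval $[-1,\lambda]$, so that the proposed $v$ is a legitimate subgradient. I expect this to be exactly where the defining property of the $\lambda$-median is needed: the two inequalities $(n^{-} - \lambda n^{+})/n^{0} \ge -1$ and $(n^{-} - \lambda n^{+})/n^{0} \le \lambda$ are equivalent to $n^{-} - \lambda n^{+} + n^{0} \ge 0$ and $n^{-} - \lambda n^{+} - \lambda n^{0} \le 0$, which are precisely the two endpoints of the optimality condition $0 \in \partial_{c} g(f, \m(f))$ already obtained in the lemma. Careful bookkeeping of ties (the vertices where $f(\x_i) = \m(f)$) is the only delicate point, and the subdifferential formalism absorbs it cleanly.
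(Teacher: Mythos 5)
Your proof is correct, and it takes a genuinely different route from the paper's. The paper first proves a lemma identifying subgradients of the asymmetric norm $\|\cdot\|_{1,\lambda}$ pointwise, checks by direct counting ($n^+ \le k$ and $n^- \le N-(k+1)$ for $k = \lfloor N/(1+\lambda)\rfloor$) that the tie value $(n^- - \lambda n^+)/n^0$ lies in $[-1,\lambda]$, and then handles the dependence of the median on its argument via the observation $\langle v, \ones \rangle = 0$: in the subgradient inequality for $\|\cdot\|_{1,\lambda}$ applied at $h = f - \m(f)\ones$ against $g - \m(g)\ones$, the term $(\m(f) - \m(g))\langle v, \ones\rangle$ vanishes, yielding $B(g) - B(f) \ge \langle v, g-f\rangle$ directly; convexity of $B$ is then inferred from the existence of such a subgradient at every point. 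You instead base everything on the variational characterization $B(f) = \min_{c} \|f - c\ones\|_{1,\lambda}$, attained at $c = \m(f)$, which buys convexity for free (partial minimization of a jointly convex function) and reduces the subgradient formula to the marginal-function rule. The two arguments share the same arithmetic core: your requirement that $0$ lie in the subdifferential of $c \mapsto \|f - c\ones\|_{1,\lambda}$ at $c = \m(f)$ is exactly the paper's pair of counting inequalities, and your requirement that the $c$-component of the joint subgradient vanish is exactly the paper's $\langle v,\ones\rangle = 0$. What your organization buys is conceptual transparency --- the admissibility of the tie value and the zero-sum condition are both revealed as first-order optimality in $c$ --- while the paper's is more elementary and self-contained, using nothing beyond the definition of the subdifferential. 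The one informal spot is your sorting argument for the lemma, which as written assumes distinct values; with ties one verifies the optimality condition directly by checking that the left and right derivatives of $c\mapsto\sum_i|f(\x_i)-c|_\lambda$ at $\m(f)$ are nonpositive and nonnegative respectively, using the same counts $n^+\le k$ and $n^-\le N-(k+1)$, as you anticipate.
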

In the above theorem $\partial B(f)$ denotes the subdifferential of $B$ at $f$ and $v \in \partial B(f)$ denotes a subgradient.  The proof of Theroem 2 can be found in the Appendix.
Given a convex function $A: \real^N \to \real$, the \emph{proximal operator} of $A$  is defined by 
\begin{equation} \label{def:prox}
\prox_{A}(g) := \underset{ f \in \real^N }{\argmin} \; \; A(f) + \frac{1}{2}||f - g||^{2}_{2}.
\end{equation}
If we let $\delta_C$ denote the \emph{barrier function} of the convex set $C$, 
\begin{equation}\label{eq:barrier}
\delta_{C}(f) := \begin{cases} 0 & \text{if} \;\; f \in C \\ +\infty & \text{if} \;\; f \notin C, \end{cases}
\end{equation}
then we easily see that $\prox_{\delta_C}$ is simply  the least-squares projection on $C$:
\begin{equation}
\prox_{\delta_C}(f) = \proj_{C}(f) := \underset{ g \in C}{ \argmin } \; \frac{1}{2} || f - g||^{2}_{2}.
\end{equation}
In this manner the proximal operator defines a mapping from $\real^N$ to $\real^N$ that generalizes the least-squares projection onto a convex set. 

\subsection{The Algorithm}
We can rewrite the problem (P-rlx) or (P-trans) as 
\begin{equation}
  \text{Minimize }  \;\; \delta_{C}(F) + \sum^{R}_{r=1} E(f_r) \quad
 \text{over  all matrices }  F = [f_1, \ldots, f_r] \in  \mathbb{M}_{N \times R}
\end{equation}
where $E(f_r)=T(f_r)/B(f_r)$ denotes the energy of the quasi-indicator function of the $r^{ {\rm th} }$ cluster.  The set $C = \Sigma$ or $C=\Sigma\cap\Lambda$ is the convex subset of  $\mathbb{M}_{N \times R}$ that encodes the simplex constraint \eqref{eq:simplex} or the simplex constraint with labels. The corresponding function $\delta_{C}(F),$ defined in \eqref{eq:barrier}, is the barrier function of the desired set. Beginning from an initial iterate $F^{0} \in C$ we propose the following proximal splitting algorithm:
\begin{equation}\label{eq:algo}
F^{k+1} := \prox_{ \mathcal{T}^{k} + \delta_C } ( F^k + \partial \mathcal{B}^k(F^k) ).
\end{equation}
Here $\mathcal{T}^{k}(F)$ and $\mathcal{B}^{k}(F)$ denote the convex functions 
\begin{equation*}
\mathcal{T}^{k}(F) := \sum^{R}_{r=1}  c_r^k \;  T(f_r) \qquad \mathcal{B}^{k}(F) := \sum^{R}_{r=1} d_r^k  \; B(f_r),
\end{equation*}
the constants $(c_r^k, d_r^k)$ are computed using the previous iterate
$$
c_r^k=\frac{\Delta^{k}}{B(f^k_r)}  \quad   \text{ and } \quad d_r^k=\frac{ \Delta^{k} E(f^k_r)}{B(f^k_r)} 
$$
 and $\Delta^{k}$ denotes the timestep for the current iteration. This choice of the constants $(c_r^k,d_r^k)$ yields
$ \B^k(F^k)=\T^k(F^k)$, and this fundamental property allows us to derive the energy descent estimate:
\begin{theorem}[Estimate of the energy descent] \label{theorem:energy_estimate}  Each of the $F^k$ belongs to $C,$ and if $B^{k}_r \neq 0$ then
\begin{equation}
\sum_{r=1}^R  \frac{B_r^{k+1}}{B_r^k} \left( E_r^k-E_r^{k+1}\right) \ge  \frac{\| F^{k}-F^{k+1} \|^2}{\Delta^k} \label{energy_estimate}
\end{equation}
where $B^k_r,E^k_r$ stand for $B(f^k_r),E(f^k_r)$.
\end{theorem}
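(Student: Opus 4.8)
The plan is to read the first-order optimality condition off the proximal step, pair it with the increment $F^{k} - F^{k+1}$, and then collapse the resulting bound using the identity $\B^k(F^k) = \T^k(F^k)$. The assertion $F^{k+1} \in C$ needs no work: by the definition \eqref{def:prox} of the proximal operator, the functional minimized to produce $F^{k+1}$ contains the barrier $\delta_C$, which equals $+\infty$ off $C$, so every minimizer lies in $C$; together with $F^0 \in C$ this also launches an induction on $k$.

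For the estimate I would first record the optimality condition. Write $p^k$ for the subgradient $\partial\B^k(F^k)$ appearing in \eqref{eq:algo}; this is a genuine element of $\partial\B^k(F^k)$ because Theorem \ref{thm:subdiff} produces one for each column $B(f_r^k)$, and these assemble (scaled by $d_r^k$) into a subgradient of $\B^k$. The minimizer $F^{k+1}$ of the strongly convex map $\T^k(F) + \delta_C(F) + \tfrac{1}{2}\|F - (F^k + p^k)\|^2$ then satisfies $0 \in \partial\T^k(F^{k+1}) + \partial\delta_C(F^{k+1}) + (F^{k+1} - F^k - p^k)$, so there exist $q^{k+1} \in \partial\T^k(F^{k+1})$ and a normal-cone element $n^{k+1} \in \partial\delta_C(F^{k+1})$ with
\begin{equation*}
F^k + p^k - F^{k+1} = q^{k+1} + n^{k+1}.
\end{equation*}

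The heart of the argument is to take the inner product of this identity with $F^k - F^{k+1}$ and bound the three right-hand terms by convexity. Convexity of $\T^k$ at $F^{k+1}$ gives $\langle q^{k+1}, F^k - F^{k+1} \rangle \le \T^k(F^k) - \T^k(F^{k+1})$; convexity of $\B^k$ at $F^k$ gives $\langle p^k, F^{k+1} - F^k \rangle \le \B^k(F^{k+1}) - \B^k(F^k)$; and, since $F^k \in C$ by the induction hypothesis, the normal-cone property gives $\langle n^{k+1}, F^k - F^{k+1} \rangle \le 0$. Substituting these and cancelling the two $\T^k(F^k)$ terms by means of $\B^k(F^k) = \T^k(F^k)$ yields
\begin{equation*}
\|F^k - F^{k+1}\|^2 \le \B^k(F^{k+1}) - \T^k(F^{k+1}).
\end{equation*}

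It then remains only to unwind the constants. Inserting $c_r^k = \Delta^k/B_r^k$ and $d_r^k = \Delta^k E_r^k/B_r^k$ and using $T(f_r^{k+1}) = E_r^{k+1} B_r^{k+1}$ rewrites the right-hand side as $\Delta^k \sum_{r} (B_r^{k+1}/B_r^k)(E_r^k - E_r^{k+1})$, and dividing by $\Delta^k$ delivers \eqref{energy_estimate}. I expect the only genuine subtlety to lie in the subdifferential calculus at $F^{k+1}$: one must justify the sum rule $\partial(\T^k + \delta_C) = \partial\T^k + \partial\delta_C$, which is automatic here since $\T^k$ is finite and continuous everywhere so that the relevant constraint qualification holds, and one must confirm that the algorithm's $p^k$ is a legitimate subgradient of $\B^k$ — precisely the content of Theorem \ref{thm:subdiff}. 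Everything downstream of the optimality condition is a direct chain of convexity estimates and algebra.
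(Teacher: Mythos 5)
Your proof is correct and follows essentially the same route as the paper: first-order optimality of the proximal step at $F^{k+1}$, the subgradient inequality for $\B^k$ at $F^k$, cancellation via $\B^k(F^k)=\T^k(F^k)$, and the identical final algebra with the constants $c_r^k, d_r^k$. The only minor difference is that the paper sidesteps the sum rule you invoke: it applies the subgradient inequality directly to the combined function $\T^k+\delta_C$ and uses $\delta_C(F^k)=\delta_C(F^{k+1})=0$ to drop the barrier terms, so no constraint qualification or normal-cone argument is ever needed.
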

Inequality \eqref{energy_estimate} states that the energies of the quasi-indicator functions (as a weighted sum) decrease at every step of the algorithm. It also gives a lower bound for how much these energies decrease. As the algorithm progress and the iterates stabilize the ratio $B_r^{k+1}/B_r^k$ converges to $1$, in which case the sum, rather than a weighted sum, of the individual cluster energies decreases. The proof of Theorem 3 can be found in the Appendix.
 
Our proximal splitting algorithm \eqref{eq:algo} requires two steps. The first step requires computing $G^k=F^k + \partial \mathcal{B}^k(F^k)$, and this is straightforward since theorem  \ref{thm:subdiff} provides the subdifferential of $B$, and therefore of $\mathcal{B}^k$, through an explicit formula. The second step requires computing  $\prox_{ \mathcal{T}^{k} + \delta_C } (G^k)$, which seems daunting at a first glance. Fortunately, minimization problems of this form play an important role in the image processing literature. Recent years have therefore produced several fast and accurate algorithms for computing the proximal operator of the total variation. As $\mathcal{T}^{k} + \delta_{C}$ consists of a weighted sum of total variation terms subject to a convex constraint, we can readily adapt these algorithms to compute the second step of our algorithm efficiently. In this work we use the primal-dual algorithm of \cite{art:ChambollePock11FastPD} with acceleration. This relies on a proper uniformly convex formulation of the proximal minimization, which we detail completely in the Appendix.
 
The primal-dual algorithm we use to compute $\prox_{ \mathcal{T}^{k} + \delta_C } (G^k)$ produces a sequence of approximate solutions by means of an iterative procedure. A stopping criterion is therefore needed to indicate when the current iterate approximates the actual solution $\prox_{ \mathcal{T}^{k} + \delta_C } (G^k)$ sufficiently. Ideally, we would like to terminate $F^{k+1} \approx \prox_{\mathcal{T}^{k} + \delta_C}(G^k)$ in such a manner so that the energy descent property \eqref{energy_estimate} still holds and $F^{k+1}$ always satisfies the required constraints. In theory we cannot guarantee that the energy estimate holds for an inexact solution. We may note, however, that a slightly weaker version of the energy estimate \eqref{energy_estimate}
\begin{equation}
\sum_{r=1}^R \frac{B_r^{k+1}}{B_r^k} \left( E^{k}_r - E^{k+1}_r \right) \ge  (1-\epsilon) \frac{\| F^{k}-F^{k+1} \|^2_F}{\Delta^k} \label{eq:approx_energy_estimate}
\end{equation}
holds after a finite number of iterations of the inner minimization. Moreover, this weaker version still guarantees that the energies of the quasi-indicator functions decrease as a weighted sum in exactly the same manner as before. In this way we can terminate the inner loop adaptively: we solve $F^{k+1} \approx \prox_{\mathcal{T}^{k} + \delta_C}(G^k)$ less precisely when $F^{k+1}$ lies far from a minimum and more precisely as the sequence $\{F^{k}\}$ progresses. Overall this leads to a substantial increase in efficiency of the full algorithm. 

Our implementation of the proximal splitting algorithm also guarantees that $F^{k+1}$ always satisfies the required constraints. We accomplish this task by implementing the primal-dual algorithm in such a way that each inner iteration always satisfies the constraints. This requires computing the projection $\proj_{C}(F)$ exactly at each inner iteration. The overall algorithm remains efficient provided we can compute this projection quickly. When $C = \Sigma$ the algorithm \cite{art:Michelot86Simplex} performs the required projection in at most $R$ steps. When $C = \Sigma \cap \Lambda$ the computational effort actually decreases, since in this case the projection consists of a simplex projection on the unlabeled points and straightforward assignment on the labeled points.

We may now summarize the full algorithm, including the proximal operator computation. In practice we find the choices $\Delta^k = \max\{ B^k_1, \ldots , B^k_R \}$ and any small $\epsilon$ work well, so we present the algorithm with these choices. Recall the matrix $K$ in \eqref{eq:gradmat} denotes the gradient matrix of the graph.
\begin{algorithm}[h!]
\caption{Proximal Splitting Algorithm}
\label{alg-cheeger}
\begin{algorithmic}
\STATE 
Input: $F \in C, P = 0, L = ||K||_{2}, \tau = L^{-1}, \epsilon = 10^{-3}$
\WHILE{loop not converged}

\STATE{//\emph{Perform outer step $G^{k} = F^{k} + \partial \mathcal{B}^{k}(F^k)$ }}
\STATE{ $\Delta = \max_{r}  B(f_r)$ \; $\Delta_{0} = \min_{r} B(f_r)$ \; \;\; $\sigma = \Delta^{2}_{0}( \tau \Delta^2 L^2)^{-1}$ \; \; $\bar{F} = F$}
\STATE{ $D_{E} = \mathrm{diag}\left[ \frac{E(f_1)}{B(f_1)},\ldots , \frac{E(f_R)}{B(f_R)} \right]$ \;\; $D_{B} = \mathrm{diag}\left[\frac{\Delta}{B(f_1)},\ldots , \frac{\Delta}{B(f_R)} \right]$}
\STATE{ $V = \Delta [ \partial B(f_1), \ldots , \partial B(f_R) ] D_{E} $ (using theorem \ref{thm:subdiff}) }
\STATE{ $G = F + V $ }
\STATE{//\emph{Perform $F^{k+1} \approx \prox_{\mathcal{T}^{k} + \delta_C}(G^k)$ until energy estimate holds}}
\WHILE{ \eqref{eq:approx_energy_estimate} fails }

\STATE{ $\tilde{P} = P + \sigma K \bar{F} D_{B} $ \; \; $P = \tilde{P}/ \max\{ |\tilde{P}|,1 \}$ (both operations entriwise) \; \; $F_{ {\rm old} } = F$ } 
\STATE{ $\tilde{F} = F - \tau K^{t} P D_{B}$ \;\; $F = (\tilde{F} + \tau G)/(1+\tau)$ \; \; $F = \proj_{C}(F)$ }
\STATE{ $\theta = 1/\sqrt{ 1 + 2\tau }$ \; \; $\tau = \theta \tau$ \; \; $\sigma = \sigma/\theta$ \; \; $ \bar{F} = (1+\theta)F - \theta F_{ {\rm old} }$ }

\ENDWHILE

\ENDWHILE
\end{algorithmic}
\end{algorithm}

\section{Numerical Experiments}\label{sec:numerics}
We now demonstrate the MTV algorithm for unsupervised and transductive clustering tasks. We selected six standard, large-scale data sets as a basis of comparison. We obtained the first data set (4MOONS) and its similarity matrix from \cite{art:BressonTaiChanSzlam12TransLearn} and the remaining five data sets and matrices (WEBKB4, OPTDIGITS, PENDIGITS, 20NEWS, MNIST) from \cite{yang2012clustering}. The 4MOONS data set contains 4K points while the remaining five contain 4.2K, 5.6K, 11K, 20K and 70K points, respectively.

Our first set of experiments compares our MTV algorithm against other unsupervised approaches. We compare against two previous total variation algorithms \cite{pro:HeinBuhler10OneSpec, BLUV12}, which rely on recursive bi-partitioning, and two top NMF algorithms \cite{arora2011clustering,yang2012clustering}. We use the normalized Cheeger cut versions of \cite{pro:HeinBuhler10OneSpec} and \cite{BLUV12} with default parameters. We used the code available from \cite{yang2012clustering} to test each NMF algorithm. All non-recursive algorithms (LSD \cite{arora2011clustering}, NMFR  \cite{yang2012clustering}, MTV) received two types of initial data: (a) the deterministic data used in \cite{yang2012clustering}; (b) a random procedure leveraging normalized-cut \cite{art:ShiMalik00NCut}. Procedure (b) first selects one data point uniformly at random from each computed NCut cluster, then sets $f_r$ equal to one at the data point drawn from the $r^{{\rm th}}$ cluster and zero otherwise. We then propagate this initial stage by replacing each $f_{r}$ with $(I + L)^{-1}f_r$ where $L$ denotes the unnormalized graph Laplacian. Finally, to aid the NMF algorithms, we add a small constant $0.2$ to the result (each performed better than without adding this constant). For MTV we use (a) and 30 random trials of (b) then report the cluster purity of the solution with the lowest discrete energy (P). We then use each NMF with exactly the same initial conditions and report simply the highest purity achieved over all 31 runs. This biases the results in favor of the NMF algorithms. Due to the non-convex nature of these algorithms, the random initialization gave the best results and significantly improved on previously reported results of LSD in particular. We allowed each non-recursive algorithm 10000 iterations using initial condition (a) while each trial of (b) performed 2000 iterations. The following table reports the results.
\begin{table}[h!]
\centering
\begin{tabular}{c|c|c|c|c|c|c}
Alg/Data & 4MOONS & WEBKB4 & OPTDIGITS & PENDIGITS & 20NEWS & MNIST \\
\hline	
NCC-TV \cite{BLUV12} & 88.75  & 51.76  & 95.91 & 73.25 & 23.20 & 88.80 \\
1SPEC \cite{pro:HeinBuhler10OneSpec}&  73.92 & 39.68 & 88.65 & 82.42 & 11.49 & 88.17 \\
LSD \cite{arora2011clustering}&  99.40 & 54.50 & 97.94 & 88.44 & 41.25 & 95.67 \\
NMFR \cite{yang2012clustering}  &  77.80 & 64.32 & 97.92 & 91.21 & 63.93 & 96.99 \\
MTV  &  99.53 & 59.15 & 98.29 & 89.06 & 39.40 & 97.60 \\
\hline		
\end{tabular}
\label{tab:unsuper}
\end{table}
Our next set of experiments demonstrate our algorithm in a transductive setting. For each data set we randomly sample either one label per class or a percentage of labels per class from the ground truth. We then run ten trials of initial condition (b) (propagating all labels instead of one) and report the purity of the lowest energy solution as before along with the average computational time (for simple MATLAB code running on a standard desktop) of the ten runs. We terminate the algorithm once the relative change in energy falls below $10^{-4}$ between outer steps of algorithm 1. The table below reports the results. Note that for well-constructed graphs (such as MNIST), our algorithm performs remarkably well with only one label per class.
\begin{table}[h!]
\centering
\begin{tabular}{c|c|c|c|c|c|c}
Labels & 4MOONS & WEBKB4 & OPTDIGITS & PENDIGITS & 20NEWS & MNIST \\
\hline	
1 &    99.55/ 3.0s           &  56.58/ 1.8s  & 98.29/ 7s & 89.17/ 14s & 50.07/ 52s & 97.53/ 98s \\
1$\%$ &     99.55/ 3.1s          &     58.75/ 2.0s   &  98.29/ 4s & 93.73/ 9s &  61.70/ 54s  & 97.59/ 54s \\
2.5$\%$  &      99.55/ 1.9s       &  57.01/ 1.7s        & 98.35/ 3s & 95.83/ 7s & 67.61/ 42s & 97.72/ 39s \\
5$\%$  &         99.53/ 1.2s      &       58.34/ 1.3s    &   98.38/ 2s            & 97.98/ 5s & 70.51/ 32s & 97.79/ 31s \\
10$\%$ &          99.55/ 0.8s       &        62.01/ 1.2s      &      98.45/ 2s       &  98.22/ 4s  & 73.97/ 25s & 98.05/ 25s \\
\hline		
\end{tabular}
\label{tab:trans}
\end{table}

Our non-recursive MTV algorithm vastly outperforms the two previous recursive total variation approaches and also comparse well with state-of-the-art NMF approaches. Each of MTV, LSD and NMFR perform well on manifold data sets such as MNIST but NMFR tends to perform best on noisy, non-manifold data sets. This results from the fact that NMFR uses a costly graph smoothing technique while our algorithm and LSD do not. We plan to incorporate such improvements into the total variation framework in future work. Lastly, we found procedure (b) can help overcome the lack of convexity inherent in many clustering approaches. We plan to pursue a more principled and efficient initialization along these lines in the future as well. Overall, our total variation framework therefore presents a promising alternative to NMF methods due to its strong mathematical foundation and tight relaxation. 


\section{Appendix}

\subsection{Proofs of Theorems}

\begin{theorem} If $f = \ones_A$ is the indicator function of a subset $A \subset V$  then
\begin{equation*}
\frac{   \|f\|_{TV} }{\|f -\m(f)\|}_{1,\lambda} = \frac{2 \; \cut(A ,A^c)}{\min \left\{ \lambda  |A| ,  |A^c| \right\}}. 
\end{equation*}
\end{theorem}
\begin{proof}
The fact that $ \|f\|_{TV} =2 \; \cut(A ,A^c)$ follows directly from the definition of the total variation. Indeed, a straightforward computation shows
\begin{equation*}
\| f \|_{TV} = \sum_{ \x_{i} \in A } \sum^{N}_{j=1} w_{ij}|1 - f(\x_{j})| + \sum_{ \x_{i} \in A^{c} } \sum^{N}_{j=1} w_{ij}|f(\x_{j})| = \sum_{ \x_{i} \in A } \sum_{\x_j \in A^{c} } w_{ij} +  \sum_{ \x_{i} \in A^{c} } \sum_{\x_j \in A } w_{ij}.
\end{equation*}
Thus $ \|f\|_{TV} =2 \; \cut(A ,A^c)$ as $W$ is symmetric. Let $B(f) := {\|f -\m(f)\|}_{1,\lambda}$. To show that $B(f) = \min \left\{ \lambda  |A| ,  |A^c| \right\}$, suppose first that  $\lambda  |A|  \leq  |A^c|$. This inequality implies $\lambda |A| \leq N - |A|,$ or  equivalently that $|A| \leq  N/(1+\lambda)$. Thus $|A| \leq k := \lfloor N/(1+\lambda) \rfloor,$ and since $f = \ones_{A}$ for $|A| \leq k$ it follows immediately that the  $(k+1)^{{ \rm st } }$  largest entry in the range of  $f$ equals zero. Thus $\m(f) = 0$ by definition. A direct computation then yields that $B(f)=\sum_{i \in V} | f(\x_i)|_\lambda = \lambda |A|$. In the converse case, the fact that  $|A^c| < \lambda  |A|  $   implies  $|A| > N/(1+\lambda) \geq k$. Thus $|A| \geq k+1$ and $\m(f)=1$. Direct computation then shows that $B(f)=\sum_{i \in V} | f(\x_i)  - 1|_\lambda   =   |A^c|$ as claimed.
\end{proof}

\begin{lemma}
Let $h \in \R^{N}$ and suppose $v \in \R^{N}$ satisfies
\begin{equation}
v(\x_{i}) \in \begin{cases}
\lambda & \text{if } h(\x_{i}) > 0 \\
[-1,\lambda] &   \text{if } h(\x_{i}) = 0 \\
-1 & \text{if } h(\x_{i}) < 0.
\end{cases}
\end{equation}
Then $v \in \partial \|h \|_{1,\lambda}$.
\end{lemma}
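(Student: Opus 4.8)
The plan is to verify $v \in \partial\|h\|_{1,\lambda}$ directly from the definition of the subdifferential of a convex function, i.e.\ to establish the subgradient inequality
\[
\|y\|_{1,\lambda} \ge \|h\|_{1,\lambda} + \langle v, y - h\rangle \qquad \text{for every } y \in \R^N .
\]
The crucial structural observation is that $\|\cdot\|_{1,\lambda}$ separates across coordinates: $\|h\|_{1,\lambda} = \sum_{i=1}^N |h(\x_i)|_\lambda$, and likewise $\langle v, y-h\rangle = \sum_{i=1}^N v(\x_i)\bigl(y(\x_i) - h(\x_i)\bigr)$ is a sum of per-coordinate terms. Consequently it suffices to prove the scalar inequality
\[
|s|_\lambda \ge |t|_\lambda + v(\x_i)\,(s - t)
\]
for each $i$, where $t = h(\x_i)$ and $s = y(\x_i)$ is arbitrary; summing over $i$ then recovers the global inequality and hence $v \in \partial\|h\|_{1,\lambda}$.

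To handle the scalar inequality cleanly I would first record the identity $|t|_\lambda = \max\{\lambda t,\, -t\}$, which is valid since $\lambda \ge 0$. This exhibits $|\cdot|_\lambda$ as a maximum of two linear functions (incidentally re-proving its convexity) and, more importantly, supplies the two lower bounds $|s|_\lambda \ge \lambda s$ and $|s|_\lambda \ge -s$ that drive the whole estimate. I then split into the three cases prescribed by the hypothesis on $v(\x_i)$.

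When $t > 0$ one has $v(\x_i) = \lambda$, so the right-hand side equals $\lambda t + \lambda(s-t) = \lambda s \le |s|_\lambda$; the case $t < 0$ with $v(\x_i) = -1$ is symmetric, giving right-hand side $-t + (-1)(s-t) = -s \le |s|_\lambda$. Both are immediate from the two lower bounds above. The only delicate case --- and the main, though mild, obstacle --- is the kink $t = 0$, where $|t|_\lambda = 0$ and $v(\x_i)$ is allowed to range over the entire interval $[-1,\lambda]$. Here the inequality reduces to $|s|_\lambda \ge v(\x_i)\,s$, which I would verify by sign of $s$: for $s \ge 0$ the bound $v(\x_i) \le \lambda$ yields $v(\x_i)s \le \lambda s = |s|_\lambda$, while for $s < 0$ the bound $v(\x_i) \ge -1$ yields $v(\x_i)s \le -s = |s|_\lambda$. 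This confirms that $[-1,\lambda]$ is exactly the admissible set of subgradient slopes at the corner, completing the coordinatewise verification; summing over coordinates finishes the proof.
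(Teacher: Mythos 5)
Your proof is correct and follows essentially the same route as the paper's: a coordinatewise reduction to the scalar subgradient inequality, verified by the bounds $|s|_\lambda \ge \lambda s$ and $|s|_\lambda \ge -s$. The paper compresses the case analysis by noting the identity $|h(\x_i)|_\lambda = v(\x_i)h(\x_i)$, which reduces every case at once to $|g(\x_i)|_\lambda \ge v(\x_i) g(\x_i)$, but the content is identical to your three-case argument.
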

\begin{proof}
Note that $|h(\x_i)|_{\lambda} = v(\x_i) h(\x_i)$ for each $\x_{i},$ so that for arbitrary $g \in \R^{N}$ and each $\x_i$ the inequality
\begin{equation*}
|g(\x_i)|_{\lambda} - |h(\x_{i})|_{\lambda} \geq v(\x_i)\left( g(\x_i) - h(\x_i) \right)
\end{equation*}
holds. Summing both sides over all $\x_i \in V$ then gives the claim.
\end{proof}

\begin{theorem} \label{thm:subdiff}
The functions  $B$ and $T$  are convex.  Moreover, given  $f \in \real^N$ the vector  $v \in \real^N$ defined by
 \begin{gather}
  v(\x_i)=\begin{cases}
 \lambda & \text{ if } f(\x_i) > \m(f)\\
  \frac{n^- - \lambda n^+ }{n^0} & \text{ if } f(\x_i) = \m(f) \\
 -1 & \text{ if } f(\x_i) < \m(f)
 \end{cases}   \quad  
  \text{ where} \quad   \begin{cases} n^0 \;= |\{\x_i \in V: f(\x_i)=\m(f)\}| \\  n^-=|\{\x_i \in V: f(\x_i)<\m(f)\}| \\ n^+=|\{ \x_i \in V: f(\x_i)>\m(f)\}|
  \end{cases}
\nonumber
 \end{gather}
belongs to  $\partial B(f)$. 
\end{theorem}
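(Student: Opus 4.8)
The convexity of $T$ is immediate: by \eqref{eq:gradmat} we have $T(f) = \|Kf\|_1$, a composition of the linear map $f \mapsto Kf$ with the $\ell^1$-norm, hence convex. For $B$ I would not argue convexity directly; instead I plan to establish the global subgradient inequality $B(g) \ge B(f) + \langle v, g-f\rangle$ for every $g$, which simultaneously certifies $v \in \partial B(f)$ and exhibits $B$ as a pointwise supremum of affine functions, forcing $B$ to be convex. (Alternatively, one can note $B(f) = \min_{c\in\real}\|f - c\ones\|_{1,\lambda}$ with the minimum attained at $c = \m(f)$, so that $B$ is the partial minimization of the jointly convex map $(f,c) \mapsto \|f - c\ones\|_{1,\lambda}$ and therefore convex.)

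The engine of the argument is the preceding Lemma applied to $h := f - \m(f)\ones$. First I would check that the proposed $v$ meets the Lemma's hypotheses. Writing $m := \m(f)$, the conditions $f(\x_i) > m$ and $f(\x_i) < m$ translate to $h(\x_i) > 0$ and $h(\x_i) < 0$, where $v$ takes the values $\lambda$ and $-1$ demanded by the Lemma. The only real content is the tie case $f(\x_i) = m$, where the Lemma permits any value in $[-1,\lambda]$ while $v$ is prescribed to be $(n^- - \lambda n^+)/n^0$. Verifying $-1 \le (n^- - \lambda n^+)/n^0 \le \lambda$ is where the definition of the $\lambda$-median enters. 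Since $m$ is the $(k+1)$-st largest entry with $k = \lfloor N/(\lambda+1)\rfloor$, the counts obey $n^+ \le k$ and $n^+ + n^0 \ge k+1$; combined with $k \le N/(\lambda+1) < k+1$ these give $n^+ \le N/(\lambda+1)$ and $n^+ + n^0 > N/(\lambda+1)$. Rearranging (using $n^+ + n^0 + n^- = N$) turns these into $n^- + n^0 \ge \lambda n^+$ and $n^- \le \lambda(n^+ + n^0)$, which are exactly the inequalities placing $v \in [-1,\lambda]$ at the median vertices. The Lemma then yields $v \in \partial\|h\|_{1,\lambda}$.

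To pass from $\partial\|\cdot\|_{1,\lambda}$ at $h$ to $\partial B$ at $f$, the crucial observation is the identity $\langle v, \ones\rangle = \lambda n^+ + (n^- - \lambda n^+) - n^- = 0$, which is precisely what the value $(n^- - \lambda n^+)/n^0$ at the tie vertices was engineered to guarantee. Given an arbitrary $g$, I would apply the subgradient inequality from the Lemma not at the naive point $g - m\ones$ but at $h' := g - \m(g)\ones$, so that $\|h'\|_{1,\lambda} = B(g)$ by the definition of $B$. This gives $B(g) \ge B(f) + \langle v, (g - \m(g)\ones) - (f - m\ones)\rangle$, and since $(g - \m(g)\ones) - (f - m\ones) = (g-f) - (\m(g) - m)\ones$, the unwanted median-shift term contributes $-(\m(g)-m)\langle v,\ones\rangle = 0$. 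What remains is exactly $B(g) \ge B(f) + \langle v, g-f\rangle$, which completes the proof.

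I expect the main obstacle to be conceptual rather than computational: recognizing that the subgradient inequality must be tested at $g - \m(g)\ones$ (shifting by the median of $g$, not of $f$) and that the resulting extra term vanishes only because of the normalization $\langle v, \ones\rangle = 0$. The verification of the range bound $(n^- - \lambda n^+)/n^0 \in [-1,\lambda]$ is the one genuinely quantitative step, and it is here that the floor in the definition of $\m$ must be handled with care.
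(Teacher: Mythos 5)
Your proposal is correct and follows essentially the same route as the paper's own proof: apply the preceding Lemma to $h = f - \m(f)\ones$ after verifying $(n^- - \lambda n^+)/n^0 \in [-1,\lambda]$ via the quantile counting bounds, then pass to $\partial B(f)$ by testing the subgradient inequality at $g - \m(g)\ones$ and using $\langle v,\ones\rangle = 0$. The only (cosmetic) difference is in certifying convexity of $B$: you exhibit it as a supremum of affine minorants, while the paper invokes continuity plus nonemptiness of the subdifferential at every point; both are standard and your variant is, if anything, slightly more self-contained.
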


\begin{proof} 
The convexity of $T(f)$ follows directly from its definition and a straightforward computation using the definition of convexity. Due to the continuity $B(f),$ to show convexity it suffices to establish the existence of a subdifferential at every point.

To this end note that $\m(f) \in \mathrm{range}(f),$ so that in particular $n^0 \ge 1$ by definition. Let $1 \leq k := \lfloor N/(1+\lambda) \rfloor < N$ denote that entry of $f$ so that $f(\x_{k}) = \m(f)$. By definition of $\m(f)$ there exist at most $k$ elements of $f$ larger than $\m(f)$, so that $n^{+} \leq k \leq N/(1+\lambda)$. As $N = n^{-} + n^{0} + n^{+}$ this implies $\frac{ \lambda n^+ - n^- }{n^0} \leq 1$. Similarly there exist at most $N-(k+1)$ elements of $f$ smaller than $\m(f)$, so that $n^{-} \leq N-(k+1) \leq N - N/(1+\lambda)$. The fact that $N = n^{-} + n^{0} + n^{+}$ then implies $\frac{ n^{-} - \lambda n^{+} }{n^0 } \leq \lambda$. Combining this with the previous inequality yields $-1 \leq \frac{ n^- - \lambda n^+ }{n^0} \leq \lambda.$

Put $h := f - \m(f) \ones,$ and note that the vector $v$ defined above satisfies $v \in \partial \|h\|_{1,\lambda}$ by the preceeding lemma. Thus for any $g \in \R^{N}$ it holds that
\begin{equation*}
|| g - \m(g) \ones ||_{1,\lambda} - ||f - \m(f) \ones||_{1,\lambda} \geq \langle v , g - f + (\m(f) - \m(g) ) \ones \rangle
\end{equation*}
by definition of the subdifferential. Note also that $\langle v, \ones \rangle = 0,$ so that in fact
\begin{equation*}
B(g) - B(f) = || g - \m(g) \ones ||_{1,\lambda} - ||f - \m(f) \ones||_{1,\lambda} \geq \langle v , g - f \rangle 
\end{equation*}
for $g \in \R^{N}$ arbitrary. Thus $v \in \partial B(f)$ by definition of the subdifferential. In particular $\partial B(f)$ is always non-empty, so $B(f)$ is convex.
\end{proof}

\begin{theorem}[Estimate of the energy descent] \label{theorem:energy_estimate}  Each of the $F^k$ belongs to $C,$ and if $B^{k}_r \neq 0$ then 
\begin{equation}
\sum_{r=1}^R  \frac{B_r^{k+1}}{B_r^k} \left( E_r^k-E_r^{k+1}\right) \ge  \frac{\| F^{k}-F^{k+1} \|^2}{\Delta^k} \label{energy_estimate}
\end{equation}
where $B^k_r,E^k_r$ stand for $B(f^k_r),E(f^k_r)$.
\end{theorem}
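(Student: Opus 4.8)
The plan is to treat the update \eqref{eq:algo} as a single proximal (forward--backward) step and to extract the estimate from the strong convexity of the underlying minimization. By definition of the proximal operator, $F^{k+1}$ is the unique minimizer over $\mathbb{M}_{N\times R}$ of
\[
\Phi(F) := \T^k(F) + \delta_C(F) + \tfrac{1}{2}\|F - G^k\|^2, \qquad G^k := F^k + V^k,
\]
where $V^k \in \partial \B^k(F^k)$ is the explicit subgradient furnished by Theorem~\ref{thm:subdiff}. Since $F^0 \in C$ and the barrier term $\delta_C$ forces every minimizer of $\Phi$ to lie in $C$, an immediate induction shows $F^k \in C$ for all $k$, which gives the first assertion and also $\delta_C(F^k) = \delta_C(F^{k+1}) = 0$. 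The quadratic term makes $\Phi$ strongly convex with modulus $1$, so comparing the minimizer $F^{k+1}$ against the point $F^k$ yields
\[
\Phi(F^k) \ge \Phi(F^{k+1}) + \tfrac{1}{2}\|F^k - F^{k+1}\|^2.
\]

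Next I would expand both sides of this inequality. Writing $G^k = F^k + V^k$ gives $\|F^k - G^k\|^2 = \|V^k\|^2$, while expanding $\|F^{k+1} - G^k\|^2 = \|(F^{k+1}-F^k) - V^k\|^2$ produces a cross term $-\langle F^{k+1}-F^k, V^k\rangle$ together with a $\|V^k\|^2$ that cancels the one on the left. After this cancellation the inequality collapses to
\[
\T^k(F^k) - \T^k(F^{k+1}) + \langle V^k, F^{k+1}-F^k\rangle \ge \|F^k - F^{k+1}\|^2.
\]
Because $V^k$ is a subgradient of the convex function $\B^k$ at $F^k$, the subgradient inequality gives $\langle V^k, F^{k+1}-F^k\rangle \le \B^k(F^{k+1}) - \B^k(F^k)$. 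Substituting this bound and then invoking the fundamental identity $\B^k(F^k) = \T^k(F^k)$ (guaranteed by the choice of the constants $c_r^k, d_r^k$) makes the two $F^k$ terms cancel, leaving
\[
\B^k(F^{k+1}) - \T^k(F^{k+1}) \ge \|F^k - F^{k+1}\|^2.
\]

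The final step is pure bookkeeping: unwinding the definitions of $\T^k$ and $\B^k$ to recognize the left-hand side as the claimed weighted energy difference. Using $c_r^k = \Delta^k / B_r^k$ and $d_r^k = \Delta^k E_r^k / B_r^k$ together with the relation $T(f_r^{k+1}) = E(f_r^{k+1}) B(f_r^{k+1}) = E_r^{k+1} B_r^{k+1}$, one finds
\[
\B^k(F^{k+1}) - \T^k(F^{k+1}) = \Delta^k \sum_{r=1}^R \frac{B_r^{k+1}}{B_r^k}\left(E_r^k - E_r^{k+1}\right),
\]
and dividing the previous inequality through by $\Delta^k > 0$ gives exactly \eqref{energy_estimate}. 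I expect the main obstacle to be precisely this bookkeeping --- correctly choosing which subgradient inequality to invoke and in which direction, and ensuring the $\|V^k\|^2$ terms cancel cleanly --- rather than any deep analytic difficulty; the genuine content lies in the strong convexity of the prox objective supplying the crucial quadratic term and in the identity $\B^k(F^k) = \T^k(F^k)$ performing the cancellation that turns $\T$'s and $\B$'s back into the concrete energies $E_r^k$.
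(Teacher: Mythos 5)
Your proof is correct and takes essentially the same route as the paper's: your key intermediate inequality $\T^k(F^k) - \T^k(F^{k+1}) + \langle V^k, F^{k+1}-F^k\rangle \ge \|F^k-F^{k+1}\|^2$ is exactly what the paper derives from the prox optimality condition $F^k + V^k - F^{k+1} \in \partial(\T^k+\delta_C)(F^{k+1})$ together with the subgradient inequality, which is just an unpacked form of your strong-convexity argument. The remaining ingredients --- the subgradient inequality for $\B^k$ at $F^k$, the cancellation via $\B^k(F^k)=\T^k(F^k)$, and the final bookkeeping using $T_r^{k+1}=E_r^{k+1}B_r^{k+1}$ --- coincide with the paper's proof step for step.
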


\begin{proof}
Let
$V^k \in  \partial \B^k(F^k)$. Then by definition of the subdifferential it follows that
\begin{equation}
 \B^k(F^{k+1}) \ge  \B^k(F^k) + \langle  F^{k+1}-F^k , V^k \rangle. \label{B_ineq}
 \end{equation}
As $F^{k+1} =  \text{prox}_{\T^k+  \delta_C}( F^k  +V^k) $ the definition of the proximal operator implies that $F^{k+1} \in C$ and that also
\begin{equation*}
F^k + V^k - F^{k+1}  \in  \partial (\T^k+ \delta_C)(F^{k+1}).
\end{equation*}
The definition of the subdifferential and the fact that $\delta_{C}(F^k) = \delta_{C}(F^{k+1}) = 0$ then combine to imply
\begin{align}
\T^k(F^k) & \ge \T^k(F^{k+1})+ \langle F^{k}-F^{k+1} , F^{k}+V^k -F^{k+1} \rangle \nonumber \\
& =  \T^k(F^{k+1})+ \| F^{k}-F^{k+1} \|^2 +  \langle F^{k}-F^{k+1} , V^k  \rangle  \label{T_ineq}
\end{align}
Adding \eqref{B_ineq} and \eqref{T_ineq} yields
$$
\T^k(F^{k})+\B^k(F^{k+1}) \ge \T^k(F^{k+1})+\B^k(F^{k})+\| F^{k}-F^{k+1} \|^2,
$$
or equivalently that $\B^k(F^{k+1}) \ge \T^k(F^{k+1})+\| F^{k}-F^{k+1} \|^2$ since $ \B^k(F^k)=\T^k(F^k)$ by construction. Expanding this last inequality shows
\begin{equation*}
\sum^{R}_{r=1} \frac{\Delta^k}{B^k_r} \left( E^{k}_r B^{k+1}_r - T^{k+1}_r \right) \geq \| F^{k}-F^{k+1} \|^2,
\end{equation*}
which yields the claim after by $B^{k+1}_r$ in each term of the summation.
\end{proof}

\subsection{Primal-Dual Formulation}

Consider the minimization
\begin{equation*}
F^{k+1} : =\prox_{ \mathcal{T}^k + \delta_{C} } ( G^k ). 
\end{equation*}
We may write this as the saddle-point problem
\begin{equation*}
\min_{ u \in \R^{NR} } \max_{p \in \R^{MR} } \; \langle p , \mathcal{K} u \rangle + G(u) - F^{*}(p).
\end{equation*}
Here the vector $u = (f_1,\ldots,f_{R})^{t}$ is a ``vectorized'' version of $F$ and the matrix $\mathcal{K}$ denotes the block diagonal matrix
\begin{equation*}
\mathcal{K} := \mathrm{blkdiag}\left(\frac{\Delta^k}{B^k_1} K,\ldots,\frac{\Delta^k}{B^k_R} K \right)
\end{equation*}
where $K$ is the gradient matrix of the graph. We define the convex function $G(u)$ as
\begin{equation*}
G(u) := \frac{1}{2} \sum^{R}_{r=1} ||f_r - g^k_r ||^2 + \delta_{C}(u),
\end{equation*}
where $\delta_{C}$ denotes the barrier function of the convex set $C$ (either the simplex or simplex with labels) as before. The convex function $F^{*}(p)$ denotes the barrier function of the $l^{\infty}$ unit ball, so that
\begin{equation*}
F^{*}(p) = \begin{cases}
0 & \text{if} \qquad |p_{i}| \leq 1 \; \; \forall \; 1 \leq i \leq MR \\
+\infty & \text{otherwise}.
\end{cases}
\end{equation*}
Note also that $G(u)$ is uniformly convex, in that if $v \in \partial G(u)$ denotes any subdifferential then for any $u' \in \R^{NR}$ the inequality
\begin{equation*}
G(u') - G(u) \geq \langle v , u' - u \rangle + \frac{1}{2}||u - u'||^{2}
\end{equation*}
holds. We may therefore apply algorithm 2 of \cite{art:ChambollePock11FastPD} with $\gamma = 1$ with to solve the saddle-point problem. This algorithm consists in the iterations
\begin{align*}
p^{n+1} &= \prox_{ \sigma^n F^{*} }( p^n + \sigma^{n} \mathcal{K} \bar{u}^n ) \\
u^{n+1} &= \prox_{ \tau^n G }( u^n - \tau^{n} \mathcal{K}^{t} p^{n+1} ) \\
\theta^{n} &= \frac{1}{ \sqrt{ 1 + 2 \tau^{n} } } \quad \tau^{n+1} = \theta^n \tau^n \quad \sigma^{n+1} = \sigma^{n} / \theta^n \\
\bar{u}^{n+1} &= u^{n+1} + \theta^{n}( u^{n+1} - u^n )
\end{align*}
and converges provided the inequality $\sigma^0 \leq (\tau^0 ||\mathcal{K}||^2_2)^{-1}$ holds for the initial timesteps. We may compute the inner proximal operators analytically to find
\begin{equation*}
( \prox_{ \sigma^n F^{*} }(z) )_{i} = z_i/\max\{1,|z_i|\} \quad \forall \; 1 \leq i \leq MR,
\end{equation*}
and by completing the square that
\begin{equation*}
\prox_{ \tau^n G }(z) = \proj_{C}\left( \frac{z + \tau^n g}{1+\tau^n} \right),
\end{equation*}
where $g = (g^k_1,\ldots,g^k_R)^t$ denotes $G^k$ in vectorized form. The inner loop of algorithm 1 then follows by re-writing these computations in matrix form.









{


\bibliographystyle{plain}

\bibliography{bib_nips}

\end{document}